\documentclass[10pt,twocolumn,letterpaper]{article}

\usepackage{cvpr}              % To produce the CAMERA-READY version
\usepackage{amssymb,amsmath,amsthm,algorithm}
\usepackage{algpseudocode}
\newtheorem{theorem}{Theorem}

\newtheorem{definition}{Definition}

\usepackage{mathtools}
\usepackage{booktabs}
\usepackage{xcolor}
\usepackage{soul}
\usepackage{multirow}
\usepackage{pgfplots} % <<< ADDED for plotting
\pgfplotsset{compat=1.18} % <<< ADDED for modern pgfplots features

\definecolor{cvprblue}{rgb}{0.21,0.49,0.74}
\usepackage[pagebackref,breaklinks,colorlinks,allcolors=cvprblue]{hyperref}

\def\confName{CVPR}
\def\confYear{2025}

\title{ S\textsuperscript{2}D: Selective Spectral Decay for \\ Quantization-Friendly Conditioning of Neural Activations}

\author{
Arnav Chavan\textsuperscript{1}\thanks{Equal contribution. First co-author order has been decided by a coin toss.} \qquad
Nahush Lele\textsuperscript{1}\footnotemark[1] \qquad
Udbhav Bamba\textsuperscript{1}\footnotemark[1] \\
Sankalp Dayal\textsuperscript{1} \qquad
Aditi Raghunathan\textsuperscript{1,2} \qquad
Deepak Gupta\textsuperscript{1} \\[0.3em]
\textbf{\textsuperscript{1}Amazon  \qquad \textsuperscript{2}Carnegie Mellon University}
}

\begin{document}
\maketitle
\begin{abstract}

Activation outliers in large-scale transformer models pose a fundamental challenge to model quantization, creating excessively large ranges that cause severe accuracy drops during quantization. We empirically observe that outlier severity intensifies with pre-training scale (e.g., progressing from CLIP to the more extensively trained SigLIP and SigLIP2). Through theoretical analysis as well as empirical correlation studies, we establish the direct link between these activation outliers and dominant singular values of the weights. Building on this insight, we propose Selective Spectral Decay ($S^2D$), a geometrically-principled conditioning method that surgically regularizes only the weight components corresponding to the largest singular values during fine-tuning. Through extensive experiments, we demonstrate that $S^2D$ significantly reduces activation outliers and produces well-conditioned representations that are inherently quantization-friendly. Models trained with $S^2D$ achieve up to 7\% improved PTQ accuracy on ImageNet under W4A4 quantization and 4\% gains when combined with QAT. These improvements also generalize across downstream tasks and vision-language models, enabling the scaling of increasingly large and rigorously trained models without sacrificing deployment efficiency.
\end{abstract}    
\section{Introduction}
\label{sec:intro}

% \ar{Is this paragraph talking about quantization, or finetuning? Maybe it should solely focus on outliers as a concept} The ``pre-train, then fine-tune" paradigm has become the cornerstone of modern machine learning, enabling large-scale models to achieve state-of-the-art performance across a vast array of tasks. However, this success comes with a significant \ar{this sentence is too vague}``pre-training tax": the emergence of extreme activation outliers that severely hinder the deployment of these models. These outliers, which are rare but high-magnitude values in the activation tensors, are a critical barrier to post-training quantization (PTQ), an essential technique for model compression. The presence of even a single outlier catastrophically inflates the quantization scaling factor, forcing the vast majority of normal activation values to be rounded to zero and causing a massive loss of information. This renders low-bit quantization practically unusable for many models pre-trained with standard techniques. Crucially, these outliers are not an intrinsic property of the models themselves but are increasingly understood to be artifacts of the training strategies employed.

Modern transformer models exhibit an increasingly prominent phenomenon: \emph{activation outliers}, or extremely large values in specific dimensions of neural network activations. These outliers, which can be orders of magnitude larger than typical activation values, occur more severely as models undergo more extensive pre-training \cite{bondarenko-etal-2021-understanding}. Although initially observed primarily in large language models, recent evidence shows this pattern extends broadly across model families and architectures \cite{darcet2024visiontransformersneedregisters}. Activation outliers can severely degrade affine quantization performance, due to inefficient bit allocation. 

Understanding the nature of these outliers is essential before developing mitigation strategies. Outliers severely compromise quantization by inflating activation ranges. For example, a single extreme value can force nearly all activations close to zero to be allotted in the same quantization bin. One could argue that outliers are functionally necessary features essential to the representation space, and removing them will be detrimental to model capability. However recent research on orthogonal optimizers \cite{liu2025muonscalablellmtraining} shows that outliers are an artifact of AdamW's biased optimization \cite{chanko2024adam}.

\begin{figure}[t]
    \centering
    \includegraphics[width=\linewidth]{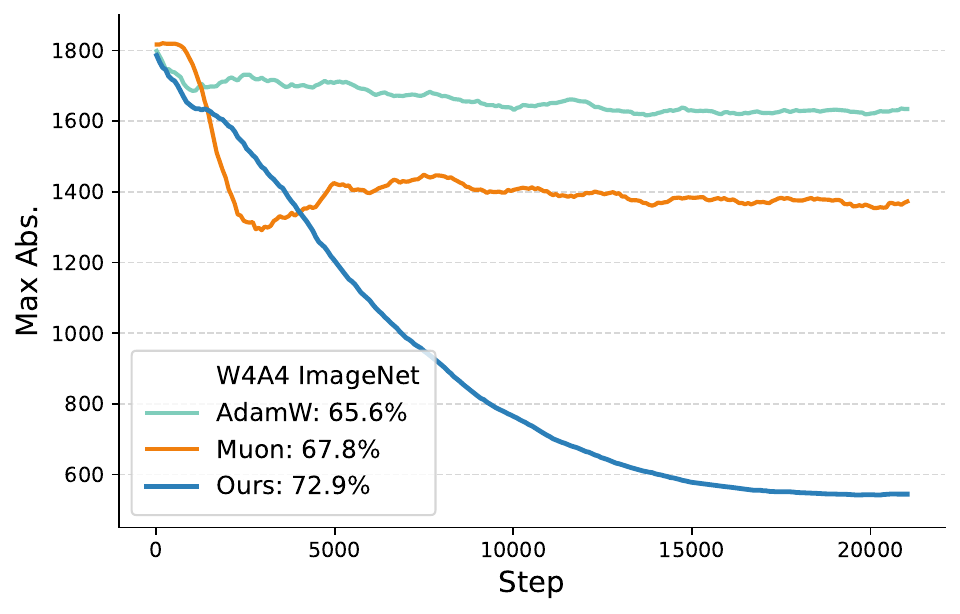}
    \caption{\textbf{Activation Outlier Suppression.}Comparison of the absolute maximum activation value (Max Abs.) of the Layer-9-FC1 output in the SigLIP-2 Base model. AdamW and Muon produce large activation outliers, whereas $S^2D$ substantially suppresses them, leading to improved downstream quantization performance as shown for W4A4.}
    \label{fig:s2d_activation_control}
\end{figure}

In this work, we empirically demonstrate that this problem escalates with the scale and duration of AdamW pre-training. Through a comparative analysis of the widely-used CLIP \cite{radford2021learningtransferablevisualmodels}, SigLIP \cite{zhai2023sigmoidlosslanguageimage}, and the more extensively trained SigLIP2 \cite{tschannen2025siglip2multilingualvisionlanguage}, we reveal a clear trend: the severity of activation outliers progressively increases with more extensive pre-training (see Figure \ref{fig:clip-outliers}). We posit that this phenomenon is a direct consequence of prolonged optimization with AdamW, whose core mechanism of adaptive, per-parameter gradient scaling is inherently anisotropic \cite{kingma2017adammethodstochasticoptimization}. Over millions of training iterations, these anisotropic updates introduce a \textit{privileged basis} in the model's representation space, where certain axes are preferentially amplified \cite{elhage2023privileged}, leading to the runaway magnitudes that characterize activation outliers. 

\begin{figure*}
    \centering
    \begin{subfigure}{0.32\textwidth}
        \includegraphics[width=\linewidth]{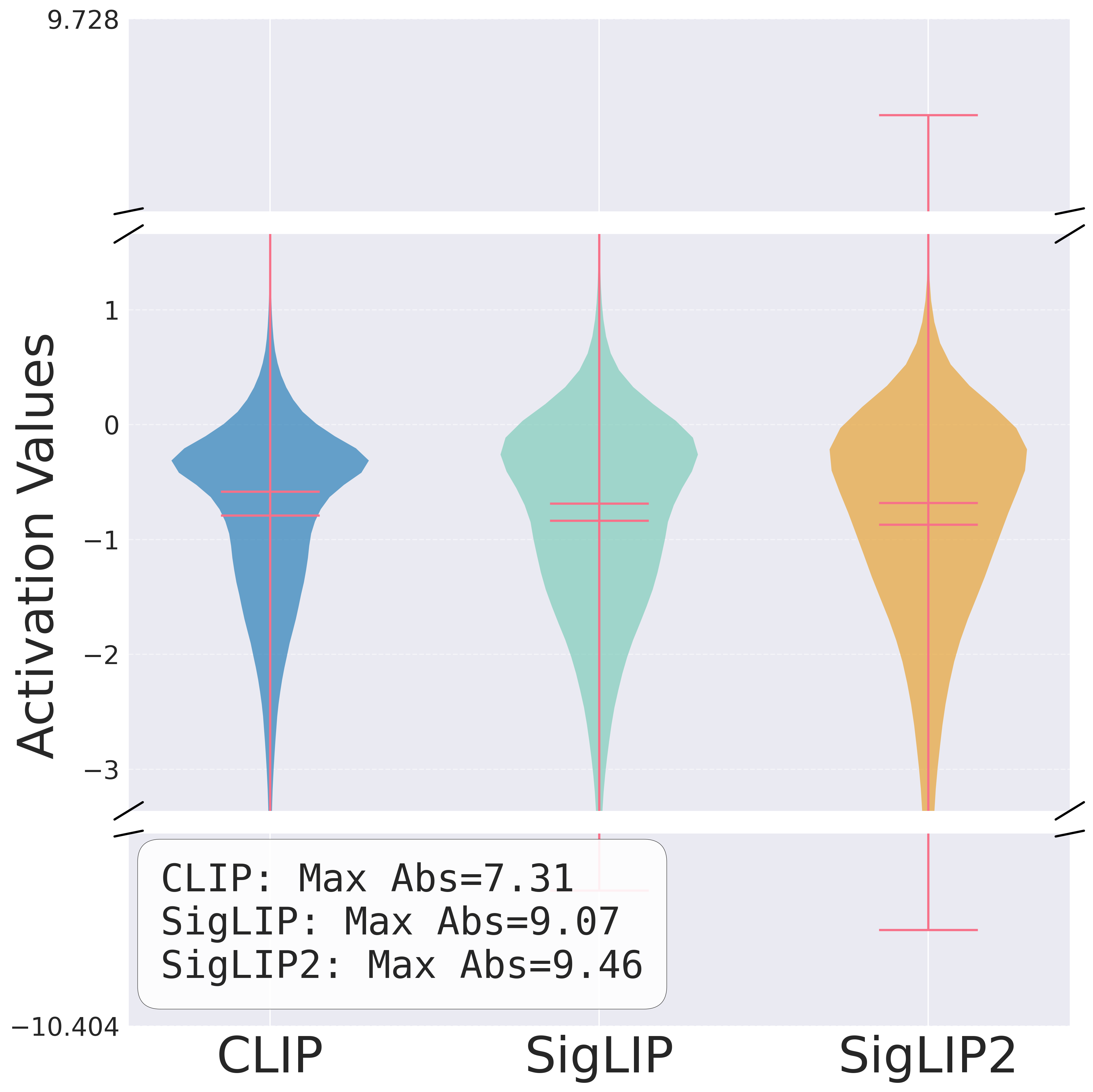}
        \caption{Layer 1}
        \label{fig:layer1_activations}
    \end{subfigure}
    \hfill % This command adds horizontal space between the figures
    \begin{subfigure}{0.32\textwidth}
        \includegraphics[width=\linewidth]{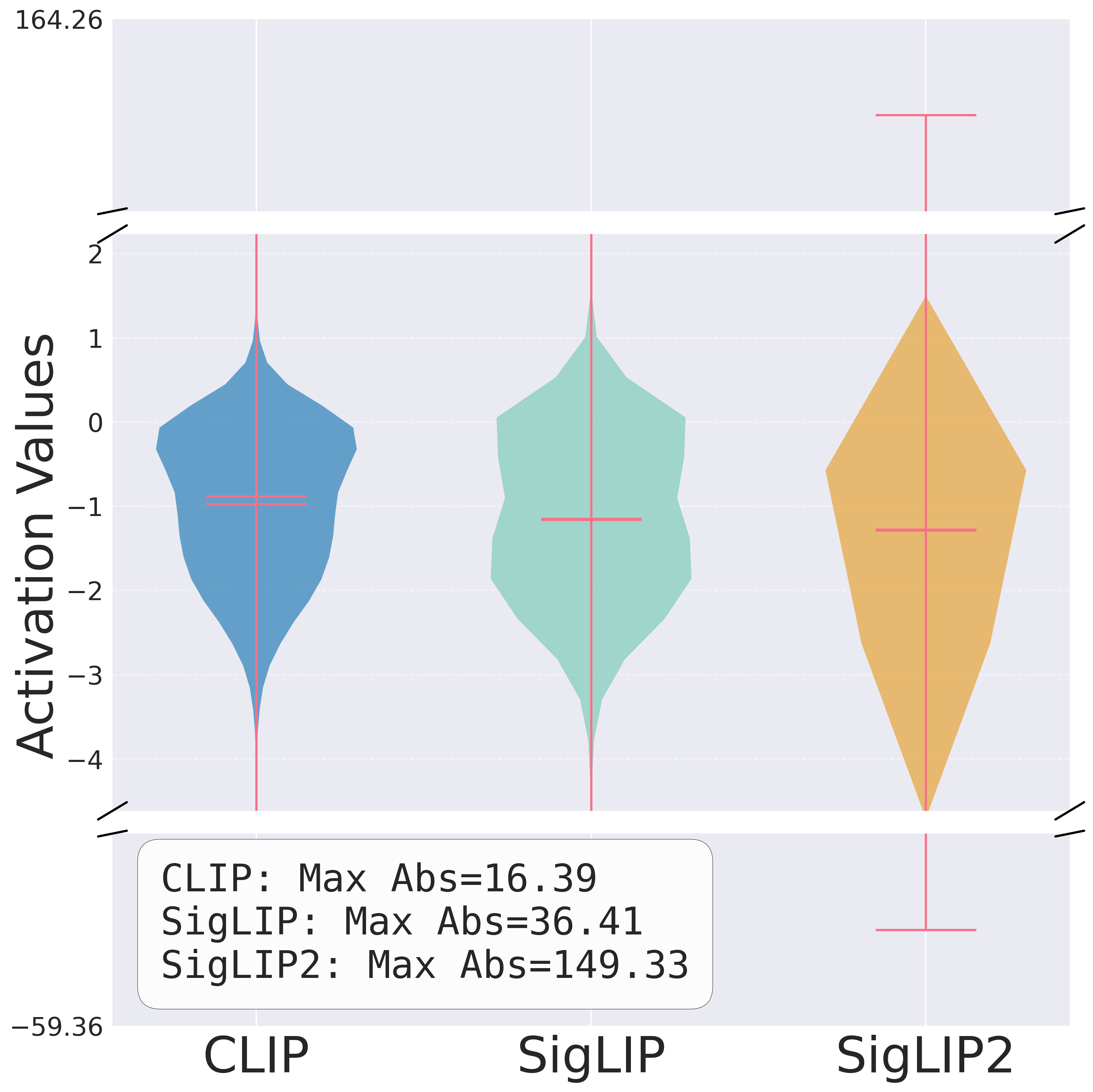}
        \caption{Layer 5}
        \label{fig:layer5_activations}
    \end{subfigure}
    \hfill % This command adds horizontal space between the figures
    \begin{subfigure}{0.32\textwidth}
        \includegraphics[width=\linewidth]{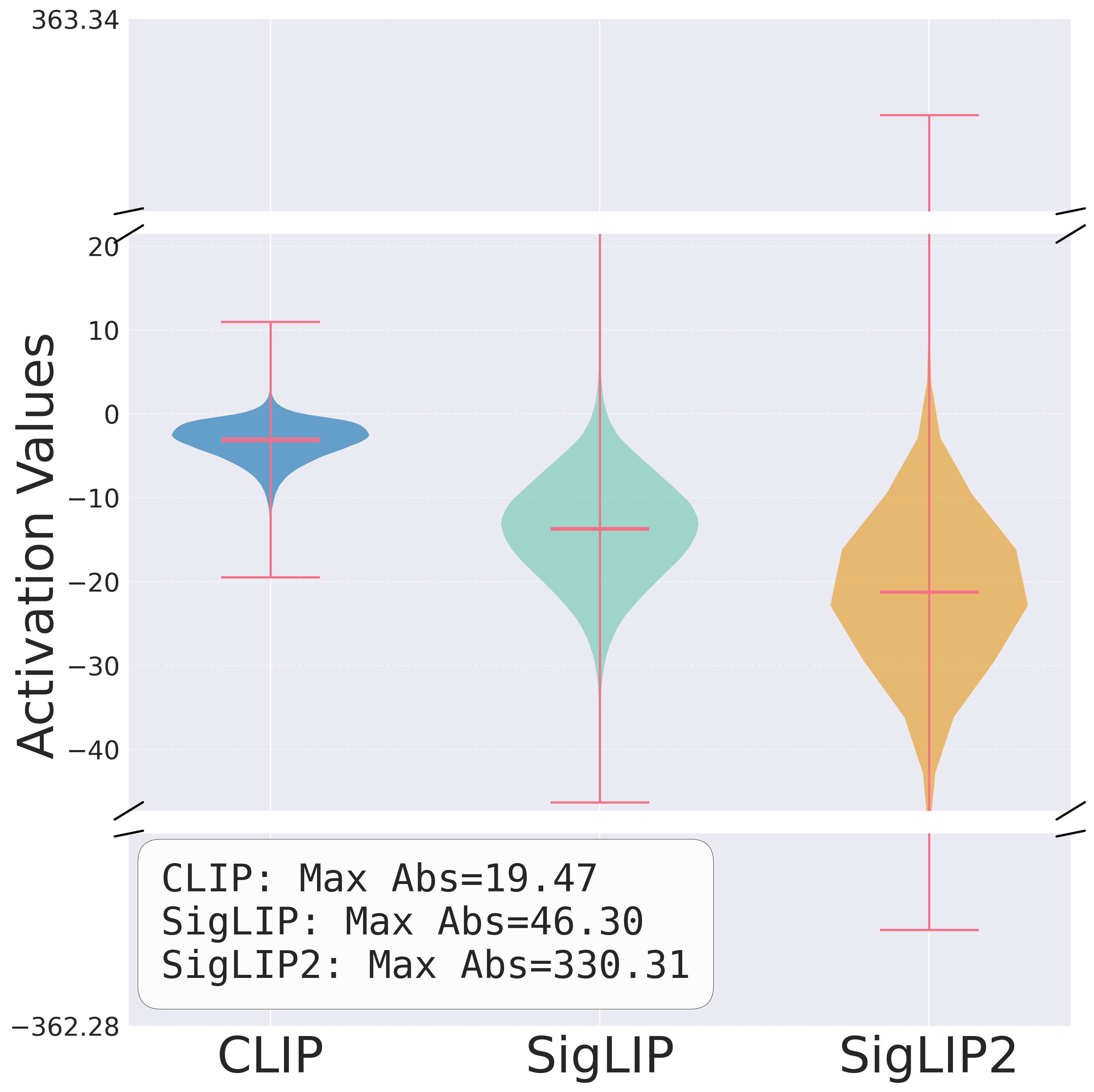}
        \caption{Layer 9}
        \label{fig:layer11_activations}
    \end{subfigure}
    \caption{
        \textbf{Activation outlier severity escalates with pre-training scale.} 
        Activation outlier severity escalates with increasing pre-training scale. 
The figure plots activation distributions from the feed-forward network (FFN) layers of ViT backbones across 
\textsc{CLIP}, \textsc{SigLIP}, and \textsc{SigLIP2}. 
A clear upward trend emerges: \textit{the magnitude of activation outliers consistently increase as we move from CLIP $\rightarrow$ SigLIP $\rightarrow$ SigLIP2}, 
highlighting the heavier-tailed activation behavior induced by larger and more recent model families.
    }
    \label{fig:clip-outliers}
\end{figure*}

The precise geometric mechanism that leads to these outliers is still not understood well. This paper establishes the direct link: the root cause of activation outliers is the uncontrolled growth of the spectral norm of the weight matrices (see Section \ref{sec:motivation}). A linear layer's capacity to amplify its input is fundamentally bounded by its spectral norm. We move beyond correlation and provide a diagnostic we term as \emph{Principal Component Dominance Ratio (PCDR)}. This metric quantifies what fraction of an activation's absolute magnitude comes from the top-$k$ singular components of the weight matrix. Our analysis reveals that activation outliers have a substantially higher PCDR$_k$, proving that these extreme values are generated by the inflated dominant singular components in the preceding weight matrix, while normal activations have significantly lower PCDR$_k$ values. 

To mitigate the occurrence of large outliers and stabilize training, orthogonal optimizers such as Muon \cite{jordan2024muon} have recently been proposed. However, these approaches are designed to train models from scratch, and when applied on an AdamW pre-trained model, the benefits are not too significant (see Figure \ref{fig:s2d_activation_control}). We propose Selective Spectral Decay ($S^2D$), a spectral conditioning method for correcting activation outliers in AdamW pre-trained models. One of the key advantages of $S^2D$ is that it works directly on existing pre-trained models without requiring expensive retraining from scratch. Figure \ref{fig:s2d_activation_control} shows that $S^2D$ is able to reduce outliers substantially compared to AdamW or Muon, improving downstream quantization performance. Using Singular Value Decomposition, $S^2D$ selectively regularizes only the largest singular values, the specific components causing outliers, while standard $L2$ weight decay uniformly shrinks all parameters. $S^2D$ can be applied during downstream fine-tuning or as a standalone post-processing step, producing well-conditioned models with maintained accuracy and improved robustness to quantization.

% The benefits of $S^2D$ extend beyond outlier mitigation. By controlling the spectral norm, $S^2D$ directly constrains the Lipschitz constant of each layer, promoting a smoother learned function that improves generalization and robustness against input perturbations \dpk{Need to demonstrate later through an experiment}. Furthermore, this spectral control offers a powerful mechanism to combat shortcut learning. Recent work has formalized that features associated with large singular values are more 'available' to a model, which can lead it to rely on spurious correlations (e.g., background texture) rather than core features (e.g., object shape) \cite{XXX}. By selectively suppressing these dominant singular values, $S^2D$ reduces the availability of such shortcuts, forcing the model to learn more robust representations. This allows for the targeted dismantling of the spectral artifacts inherited from pre-trained models (\emph{e.g. AdamW trained}) while preserving the valuable information stored in the smaller singular components. We show that $S^2D$ is most effective when paired with an isotropic optimizer like Muon, which avoids re-introducing the anisotropic biases that created the problem. 

\noindent\textbf{Our contributions are as follows.}
\begin{itemize}
\item We demonstrate that activation outlier severity escalates with pre-training scale and duration across vision-language models (\emph{e.g.}, CLIP $\rightarrow$ SigLIP $\rightarrow$ SigLIP2), establishing outliers as an inherent artifact of prolonged optimization with traditional optimizers such as AdamW.

\item We establish the direct link between inflated dominant singular values of weight matrices and activation outliers, and introduce the top-$k$ Principal Component Dominance Ratio (PCDR$_k$) as a diagnostic metric.

\item We propose Selective Spectral Decay ($S^2D$), a geometrically-principled regularizer that selectively penalizes largest singular values during fine-tuning, suppressing the spectral pathologies responsible for outliers while preserving useful model capacity.

\item We demonstrate through extensive experiments that $S^2D$ produces well-conditioned, quantization-ready models and push the performance of existing state-of-the-art quantization methods.

\end{itemize}
% \ar{Overall about the intro: i think the claims and results should be more concrete: what datasets you use, what gains you see etc. We could also separate out intuitions vs formal claims vs actual experimental contributions a little better. It's also unclear whether we are focusing on quantization, or finetuning (or both?).}
\section{Related Works}

The phenomenon of activation outliers, reflected through extreme values that appear consistently in specific feature dimensions, had emerged as a critical challenge in deploying large-scale neural networks. \citet{dettmers2022llmint88bitmatrixmultiplication} characterized this phenomenon in LLMs, demonstrating that outlier features can exhibit magnitudes up to 150,000 times larger than typical activations. \citet{xiao2024smoothquantaccurateefficientposttraining} showed across multiple transformer architectures that outlier dimensions are highly consistent across tokens and that outlier severity increases in deeper layers. \citet{yao2022zeroquantefficientaffordableposttraining} extended this analysis to show that outliers appear across different model families and scales, with severity generally increasing with model size. \citet{wei2023outliersuppressionaccuratequantization} further demonstrated that outlier patterns persist across different training runs and are reproducible, suggesting they arise from fundamental properties of the training process rather than random initialization effects.

While the majority of outlier research has focused on language models, a few recent works have reported it in the context of vision transformers and multimodal models \cite{darcet2024visiontransformersneedregisters}. Our work contributes to this line of research by providing the first comparative analysis varying pre-training durations (CLIP, SigLIP, SigLIP2) and demonstrating that outlier severity correlates with training duration rather than model capability or task complexity. This observation provides some evidence that outliers are optimization artifacts rather than functionally necessary features.

The impact of activation outliers on quantization has been extensively studied, as they posed a fundamental challenge to model compression. \citet{dettmers2022llmint88bitmatrixmultiplication} demonstrated that even a single outlier dimension can catastrophically affect the process of uniform quantization. Extreme outlier values force the scale factor to be very large, causing the vast majority of normal-magnitude activations to be rounded to zero or very small integers, thereby leading to \textit{quantization collapse}. They proposed to process outlier dimensions (approximately 0.1\% of features) in FP16, and the remaining in INT8. This vector-wise quantization preserves accuracy but requires dynamic routing and specialized kernels. \cite{liu2024qllmaccurateefficientlowbitwidth} presented QLLM that extends this with more sophisticated outlier detection and handling mechanisms. An alternative line of work attempts to reduce outlier impact through mathematically equivalent transformations \cite{shao2024omniquantomnidirectionallycalibratedquantization, liu2025spinquantllmquantizationlearned, tan2024mobilequantmobilefriendlyquantizationondevice}. SmoothQuant \cite{xiao2024smoothquantaccurateefficientposttraining} mitigated the quantization difficulty by migrating outlier issue from activations to weights through per-channel scaling. Outlier Suppression \cite{wei2023outliersuppressionaccuratequantization} proposes channel-wise shifting and scaling operations that equivalently transform the network to reduce outlier magnitudes.

% RepQViT PTQ4ViT and ERQ more than other ViT PTQ methods
Concurrently, a complementary line of work has developed PTQ methods tailored to the unique architectural challenges of Vision Transformers. RepQViT \cite{li2023repqvitscalereparameterizationposttraining} introduced specialized handling for post-LayerNorm (using channel-wise quantization) and post-Softmax (using $\log\sqrt{2}$) quantization, later transforming them to simple quantizers through scale reparameterization. PTQ4ViT \cite{yuan2024ptq4vitposttrainingquantizationvision} addresses the \textit{unbalanced} post-Softmax and \textit{asymmetric} post-GELU outputs, by proposing a \textit{twin uniform quantization} scheme that uses separate, hardware-friendly quantizers for different value ranges. ERQ \cite{zhong2025accurateposttrainingquantizationvision} introduces an innovative two-step framework to sequentially reduce both activation and weight quantization errors. By formulating their minimization as a Ridge Regression problem, ERQ addresses the intricate interdependence between weight and activation errors, thereby significantly outperforming existing ViT and LLM-centric approaches like SpinQuant \cite{liu2025spinquantllmquantizationlearned} and OmniQuant \cite{shao2024omniquantomnidirectionallycalibratedquantization} on ViT models.

Our work differs from existing approaches in several key aspects. Unlike methods that work around outliers (mixed-precision, smoothing), we address their root cause by conditioning the weight matrices to reduce spectral imbalance. Unlike QAT approaches that require  retraining, $S^2D$ can be applied to existing pre-trained models. Importantly, $S^2D$ is complementary to existing quantization methods: by producing well-conditioned models with reduced outliers, $S^2D$ creates better starting points for PTQ methods, and can be naturally combined with QAT during task-specific fine-tuning to achieve even better quantization robustness.
\section{Motivation}
\label{sec:motivation}
A well-known barrier to effective quantization is the presence of activation outliers, which force most normal activations to be squished into a narrow dynamic range, leading to sub-optimal bin allocation and ultimately degrading model accuracy. Existing works have identified this as a critical problem, but the root causes remain poorly understood. Our goal is to understand: \textit{where do these outliers come from, and can we eliminate them without retraining from scratch?} Answering these questions requires investigating the relationship between pre-training dynamics and outlier formation. We provide two key empirical observations that motivate the development of $S^2D$. First, we observed that the problem of activation outliers is not a static issue but rather one that escalates with the scale and duration of pre-training in foundational vision models. Second, moving beyond correlation we established a direct link, showing that these outliers are generated by the dominant singular components of the weight matrices they pass through.

\noindent{\textit{\textbf{Outliers scale with pre-training.} }}We demonstrate here that the problem of outliers intensifies systematically with pre-training scale and duration. To investigate this relationship, we analyze three foundational vision encoder models: CLIP, SigLIP, and SigLIP2. This progressions represent substantial increases in training compute and data scale, providing controlled observations of long-term AdamW pre-training effects. Figure \ref{fig:clip-outliers} shows the progressive emergence of activation outliers across three representative layers of the CLIP family. Compared to CLIP, SigLIP shows significantly increased outlier severity represented through wider tails, and this phenomenon exponentially amplifies further for SigLIP2. Note that this degradation can be seen across layers and this is evident from the distributions shown for Layers 1, 5 and 9.

\begin{table}[t!]
    \centering
    \caption{
        \textbf{Principal Component Dominance Ratios for Outlier Activations.}
        Top-1 and Top-3 PCDR for FFN layer activations, along with the maximum singular value of the corresponding FFN weights across CLIP, SigLIP and SigLIP2. $\sigma_{max}$ denotes largest singular value.
    }
    \label{tab:pcd_ratios_comparison} % Updated label
    % Updated column format to: Model, Layer, Top-3, Top-5
    \resizebox{0.9\linewidth}{!}{
    \begin{tabular}{llccc}
        \toprule
        \textbf{Model} & \textbf{Layer} & \textbf{PCDR$_{1}$} & \textbf{PCDR$_{3}$} & \textbf{$\sigma_{max}$} \\
        \midrule
        
        % Using \multirow to group layers under each model
        \multirow{3}{*}{CLIP} 
         & Layer 1 & 0.08 & 0.09 & 3.78 \\
         & Layer 5 & 0.05 & 0.14 & 3.06 \\
         & Layer 9 & 0.18 & 0.19 & 4.81 \\
         % & Layer 10 & XXX & XXX & \\
        \midrule
        
        \multirow{3}{*}{SigLIP} 
         & Layer 1 & 0.01 & 0.02 & 3.40 \\
         & Layer 5 & 0.03 & 0.89 & 4.86\\
         & Layer 9 & 0.14 & 0.14 & 7.59 \\
         % & Layer 10 & XXX & XXX \\
        \midrule
        
        % Original SigLIP2 data, with only Top-3 and Top-5 kept
        \multirow{3}{*}{SigLIP2} 
         & Layer 1 & 0.02 & 0.06 & 4.53 \\
         & Layer 5 & 0.93 & 0.95 & 10.8 \\
         & Layer 9 & 0.88 & 0.99 & 8.19 \\
         % & Layer 10 & 0.865 & 0.865 \\
        \bottomrule
    \end{tabular}
    }
\end{table}

\noindent{\textit{\textbf{Spectral decomposition of activation outliers}}. }The correlation between training scale and outlier severity raises an important question: which components of the weight matrices are responsible for generating extreme activation values? We hypothesize that outliers are not distributed across all spectral components but are disproportionately generated by the dominant singular values or spectral norm of weights. To test this, we perform a spectral decomposition analysis using what we term as the \textit{Principal Component Dominance Ratio (PCDR$_k$)}, which quantifies how much of an individual activation's magnitude originates from the largest few (top-$k$) singular components.

For the $j^{\text{th}}$ data sample, given an input activation vector $\mathbf{x}_j$ to a layer with weight matrix $\mathbf{W} = \mathbf{U\Sigma V}^{\intercal}$, the output activation for neuron $i$ can be decomposed as: 
\begin{equation} 
    A_{ij} = \sum_r \sigma_r u_{ir} \mathbf{v}_{r}^{\intercal} \mathbf{x}_j
\end{equation} 
where $r$ denotes the total number of singular directions.

To account for the activation mass contributed by the top-$k$ singular values, we define PCDR$_k$ as the fraction of the activation's magnitude that comes from the top $k$ singular components.
\begin{equation} 
\text{PCDR}_{k}^{(i,j)} = \frac{\sum_{r=1}^{k} \big|\sigma_r u_{ir} \mathbf{v}_r^{\intercal} \mathbf{x}_j \big|}{\sum_r \big|\sigma_r u_{ir} \mathbf{v}_{r}^{\intercal} \mathbf{x}_j \big|}, 
\end{equation}

where a PCDR$_k$ value of close to 1 indicates that the activation value is almost entirely determined by the top-$k$ components, while values near $1/n$ (where $n$ is the total number of components) indicate contributions are uniformly distributed.

We compute PCDR$_{k}$ for the largest activations value $A_{ij}$ in the FFN layer of ViT models. Results are presented in Table \ref{tab:pcd_ratios_comparison}. It can be seen that PCDR$_3$ increases and approaches values closer to 1 as we scale from CLIP to SigLIP to SigLIP2. This analysis establishes that outliers are not uniformly generated by the entire weight matrix but are specifically produced by inflated dominant singular values.

% \begin{figure*}
%     \centering
%     \begin{subfigure}{0.32\textwidth}
%         \includegraphics[width=\linewidth]{assets/violin_comparison_vision_model_encoder_layers_1_mlp_fc1.png}
%         \caption{Layer 1}
%         \label{fig:layer1_activations}
%     \end{subfigure}
%     \hfill % This command adds horizontal space between the figures
%     \begin{subfigure}{0.32\textwidth}
%         \includegraphics[width=\linewidth]{assets/violin_comparison_vision_model_encoder_layers_5_mlp_fc1.png}
%         \caption{Layer 5}
%         \label{fig:layer5_activations}
%     \end{subfigure}
%     \hfill % This command adds horizontal space between the figures
%     \begin{subfigure}{0.32\textwidth}
%         \includegraphics[width=\linewidth]{assets/violin_comparison_vision_model_encoder_layers_10_mlp_fc1.png}
%         \caption{Layer 10}
%         \label{fig:layer11_activations}
%     \end{subfigure}
%     \caption{\dpk{[PLACEHOLDER FOR THE QWEN FAMILY PLOTS]}
%         \textbf{Activation outlier severity escalates with pre-training scale and model depth.} 
%         Violin plots of activation distributions from the feed-forward network (FFN) layers of ViT backbones for CLIP, SigLIP, and SigLIP2.
%     }
%     \label{fig:qwen-outliers}
% \end{figure*}

\section{Mathematical Formulation}

In this section, we first establish the link between the spectral properties of a layer's weight matrix and the magnitude of its output activations. This provides a formal basis for our central hypothesis that activation outliers are a direct consequence of an inflated spectral norm. Building on this, we formulate our proposed regularizer, Selective Spectral Decay ($S^2D$) and detail its mechanism along with an efficient implementation.

\noindent{\textbf{\textit{Preliminaries.} }} The fundamental building block of a neural network is the linear layer, which performs the transformation $\mathbf{y} = \mathbf{Wx}$ for a weight matrix $\mathbf{W} \in \mathbb{R}^{m \times n}$. The geometric properties of this transformation are characterized by the Singular Value Decomposition (SVD) of $\mathbf{W}$:
\begin{equation}
    \mathbf{W} = \mathbf{U \Sigma V}^{\intercal} = \sum_{r=1}^{N} \sigma_r \mathbf{u}_r \mathbf{v}_r^{\intercal}
    \label{eq:svd}
\end{equation}
where $N = \min(m, n)$, the columns of $\mathbf{U} \in \mathbb{R}^{m \times m}$ and $\mathbf{V} \in \mathbb{R}^{n \times n}$ are the orthonormal left and right singular vectors, respectively, and $\Sigma \in \mathbb{R}^{m \times n}$ is a rectangular diagonal matrix containing the singular values $\sigma_1 \ge \sigma_2 \ge \dots \ge \sigma_N \ge 0$. 
%Geometrically, this decomposition describes any linear map as a rotation ($V^T$), followed by a coordinate-wise scaling ($\Sigma$), followed by another rotation ($U$).

The most common form of regularization, L2 weight decay, penalizes the squared Frobenius norm of the weight matrix:
\begin{equation}
    \mathcal{L}_{2} = \frac{\lambda}{2} \|\mathbf{W}\|_F^2 = \frac{\lambda}{2} \sum_{i=1}^{N} \sigma_i^2
    \label{eq:l2_decay}
\end{equation}
This penalty applies a uniform decay pressure to all singular values, regardless of their magnitude. While effective for general-purpose regularization, it is not specifically designed to target the spectral artifacts that we have shown, are responsible for activation outliers. 
% \ar{I think the experiments do not show that Frobenius norm does *not* go up as we see more outliers?}\dpk{What is the action on this comment?}

\noindent{\textbf{\textit{The Spectral Origin of Activation Outliers.} }} We now formalize the link between the spectral norm of a weight matrix and its capacity to generate large-magnitude activations.
% \ar{I don't think this has to be a theorem... it's just the definition}
\begin{theorem}
\label{thm:activation_bound}
Let $\mathbf{y} = \mathbf{Wx}$ be the output of a linear layer for an input vector $\mathbf{x} \in \mathbb{R}^n$ and weight matrix $\mathbf{W} \in \mathbb{R}^{m \times n}$. The Euclidean norm of the output vector is bounded by the spectral norm of the weight matrix $\sigma_{\max}(\mathbf{W})$, as follows:
\begin{equation}
    \|\mathbf{y}\|_2 \le \sigma_{\max}(\mathbf{W}) \cdot \|\mathbf{x}\|_2
\label{prop:1}
\end{equation}
% \textbf{Proof.} See Supplementary Material.
\end{theorem}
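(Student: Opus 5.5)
The plan is to prove the bound directly from the singular value decomposition in \eqref{eq:svd}, exploiting the fact that orthogonal matrices preserve the Euclidean norm. Write $\mathbf{W} = \mathbf{U\Sigma V}^{\intercal}$ with $\mathbf{U}\in\mathbb{R}^{m\times m}$ and $\mathbf{V}\in\mathbb{R}^{n\times n}$ orthogonal. Set $\mathbf{z} = \mathbf{V}^{\intercal}\mathbf{x}$. Orthogonality of $\mathbf{V}$ gives $\|\mathbf{z}\|_2 = \|\mathbf{x}\|_2$, and orthogonality of $\mathbf{U}$ gives $\|\mathbf{y}\|_2 = \|\mathbf{U\Sigma z}\|_2 = \|\mathbf{\Sigma z}\|_2$. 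The problem therefore reduces to bounding the action of the rectangular diagonal matrix $\mathbf{\Sigma}$.

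Next I compute $\|\mathbf{\Sigma z}\|_2^2$ coordinatewise. The vector $\mathbf{\Sigma z}$ has entries $\sigma_r z_r$ for $r\le N=\min(m,n)$ and zeros elsewhere. Hence
\begin{equation*}
\|\mathbf{\Sigma z}\|_2^2 = \sum_{r=1}^{N}\sigma_r^2 z_r^2 \le \sigma_1^2\sum_{r=1}^{N} z_r^2 \le \sigma_1^2 \|\mathbf{z}\|_2^2 .
\end{equation*}
This uses the ordering $\sigma_1\ge\cdots\ge\sigma_N\ge 0$ and the fact that discarding the coordinates $z_{N+1},\dots,z_n$ (present only when $n>m$) can only decrease the sum. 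Taking square roots and substituting $\sigma_1=\sigma_{\max}(\mathbf{W})$ and $\|\mathbf{z}\|_2=\|\mathbf{x}\|_2$ yields \eqref{prop:1}.

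There is no genuine obstacle here. The only care needed is in the rectangular bookkeeping: one must check that $\mathbf{\Sigma z}$ has exactly $N$ possibly nonzero entries whichever of $m$ or $n$ is larger. An equivalent way to present the argument is to note that $\sigma_{\max}(\mathbf{W})^2$ is the largest eigenvalue of $\mathbf{W}^{\intercal}\mathbf{W}$, and then apply the Rayleigh quotient bound $\mathbf{x}^{\intercal}\mathbf{W}^{\intercal}\mathbf{W}\mathbf{x}\le\lambda_{\max}\|\mathbf{x}\|_2^2$. Finally, I would remark that equality holds at $\mathbf{x}=\mathbf{v}_1$, so the bound is tight. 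This tightness is what connects the result to the paper's claim that outliers arise along the dominant singular directions.
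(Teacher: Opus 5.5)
Your proof is correct, but it takes a different route from the paper's. The paper argues in one line from the operator-norm characterization. It defines $\|\mathbf{W}\|_2 := \sup_{\mathbf{x}\neq\mathbf{0}} \|\mathbf{Wx}\|_2/\|\mathbf{x}\|_2$ and asserts without proof that this supremum equals $\sigma_{\max}(\mathbf{W})$. It then reads off the inequality from the definition of the supremum, treating $\mathbf{x}=\mathbf{0}$ separately because it divides by $\|\mathbf{x}\|_2$. All of the theorem's content is therefore delegated to the cited identity $\|\mathbf{W}\|_2=\sigma_{\max}(\mathbf{W})$. Your argument instead proves the needed direction of that identity from the SVD. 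Orthogonal invariance reduces the problem to the rectangular diagonal $\mathbf{\Sigma}$, and the coordinatewise bound $\sum_{r\le N}\sigma_r^2 z_r^2\le\sigma_1^2\|\mathbf{z}\|_2^2$ finishes it; since nothing is divided, no zero-vector case is needed. Combined with your tightness remark $\|\mathbf{W}\mathbf{v}_1\|_2=\sigma_1$, you in fact establish $\|\mathbf{W}\|_2=\sigma_{\max}(\mathbf{W})$ outright. Your version is thus self-contained given only existence of the SVD, which the paper already assumes in its preliminaries. The paper's version is shorter but only as strong as the standard fact it quotes. The Rayleigh-quotient variant you mention is a third valid route, resting on the spectral theorem for $\mathbf{W}^{\intercal}\mathbf{W}$ rather than on the SVD.
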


% \begin{proof}
% Let \(\|\cdot\|_2\) denote the Euclidean norm on vectors.  
% The matrix norm induced by \(\|\cdot\|_2\) is defined as
% \[
%     \|\mathbf{W}\|_2 := \sup_{\mathbf{x} \neq \mathbf{0}} \frac{\|\mathbf{Wx}\|_2}{\|\mathbf{x}\|_2} = \sigma_{\max}(\mathbf{W})
% \]
% Therefore, for any \(\mathbf{x} \in \mathbb{R}^n\),
% \[
%     \|\mathbf{y}\|_2 = \|\mathbf{Wx}\|_2 \le \|\mathbf{W}\|_2 \|\mathbf{x}\|_2 = \sigma_{\max}(\mathbf{W})\|\mathbf{x}\|_2.
% \]
% \end{proof}
This establishes that a large spectral norm is a necessary condition for a layer to produce a large-magnitude output from a reasonably scaled input, providing a direct mechanism for the amplification of activation magnitude. 

\subsection{Selective Spectral Decay ($S^2D$)}

Having established that the inflation of the largest singular values is the primary mechanism behind activation outliers, we introduce a regularizer that specifically targets this specific behavior.

\begin{definition}
Given a weight matrix $\mathbf{W} = \mathbf{U \Sigma V}^\intercal$, we define $\mathcal{W}^{(n)} = \mathbf{U \Sigma}^n \mathbf{V}^\intercal$ for a real exponent $n > 1$. The Selective Spectral Decay regularizer is then defined as
\begin{equation}
\mathcal{L}_{\textnormal{S}^\textnormal{2}\textnormal{D}}^{(n)}(\mathbf{W}) = \frac{\lambda}{n+1} \operatorname{tr}\big((\mathcal{W}^{(n)})^\intercal\mathbf{W}\big) \end{equation} 
\textnormal{where $\operatorname{tr}$ denotes the trace. By orthogonality of $\mathbf{V}$ and $\mathbf{U}$}
\[\operatorname{tr}\!\big((\mathcal{W}^{(n)})^\intercal\mathbf{W}\big) 
= \operatorname{tr}(\mathbf{V \Sigma}^{n+1} \mathbf{V}^\intercal),
\]
\textnormal{and by cyclicity of trace,}
\[
\operatorname{tr}(\mathbf{V \Sigma}^{n+1} \mathbf{V}^\intercal) 
= \operatorname{tr}(\mathbf{\Sigma}^{n+1}) 
= \sum_{i=1}^N \sigma_i^{n+1}.
\]
\textnormal{Thus, we obtain}
\[
\mathcal{L}_{\textnormal{S}^\textnormal{2}\textnormal{D}}^{(n)}(\mathbf{W}) 
= \frac{\lambda}{n+1} \sum_{i=1}^N \sigma_i^{n+1}.
\]
\end{definition}

By choosing $n > 1$, the penalty $\sigma_i^{n+1}$ disproportionately affects larger singular values while having a little to negligible effect on smaller ones. This provides a directed mechanism for suppressing the spectral inflation compared to standard L2 decay (which corresponds to $n=1$). 
This allows us to penalize those \(W_{ij}\) proportionately which are not just large in value, but specifically large due to the influence of a larger singular components in the system.

Based on the above formulation, the standard partial gradients of the \(L2\) regularizer can be modified from:
\begin{flalign*}
& \frac{\partial}{\partial W_{ij}}\!\left(\frac{\lambda}{2}\|\mathbf{W}\|_F^2\right)
= \lambda W_{ij}
= \lambda \sum_{k=1}^{N} U_{ik}\,\sigma_k\,V_{jk}. & \\
\intertext{and rewritten for $S^2D$ as:}
& \frac{\partial}{\partial W_{ij}}\!\left(\frac{\lambda}{n+1} \operatorname{tr}\big((\mathcal{W}^{(n)})^\intercal\mathbf{W}\big) \right)
= \lambda W^{(n)}_{ij} \\
&= \lambda \sum_{k=1}^{N} U_{ik}\,\sigma_k^n\,V_{jk}. &
\end{flalign*}

This formulation focuses the regularization pressure on the top few $\sigma_i$ - the singular values directly responsible for the worst-case amplification of activations as shown in Theorem \ref{thm:activation_bound}.

\subsection{$S^2D$ in Action}
\label{sec:implementation}

$S^2D$ regularizer provides a powerful tool for penalizing dominant singular values. However, a naive implementation that computes a full SVD and applies the gradient to all singular values of all layers at every training step is computationally prohibitive and unnecessary. As our analysis in Section \ref{sec:motivation} demonstrated, activation outliers are a pathological phenomenon driven by a few dominant components in a subset of layers.

Therefore, a practical and efficient implementation of $S^2D$ must be both selective and computationally amortized. We achieve this through a PCDR$_{k}$-based criterion and a staggered update schedule.

\noindent\textbf{\textit{PCDR$_{k}$-based Selection.} }
We directly use PCDR$_{k}$ to identify which layers require regularization and, for those layers, which singular components to target. This selection process is governed by two hyperparameters: (1) $\tau$ -- The minimum PCDR contribution that signifies a pathological concentration of mass. (2) $K_{max}$ -- The maximum number of dominant singular values to consider.

For a given layer, we find the smallest rank $k_{target}$ such that $1 \le k_{target} \le K_{max}$ and PCDR$_{k_{target}}$ $\ge \tau$. If such a $k_{target}$ exists, the layer is marked for regularization, and the $S^2D$ penalty is applied only to its top $k_{target}$ singular components. If PCDR$_{K_{max}}$ $< \tau$, the layer is considered healthy, and no $S^2D$ gradient is applied.

\noindent\textbf{\textit{Amortized SVD Computation.} }
The primary computational bottleneck of $S^2D$ is the SVD computation itself. Instead of re-computing the SVD at every step, we perform a full SVD on all network layers only once every $m$ iterations. This step identifies the target layers and their corresponding $k_{target}$ ranks, and caches the singular vectors ($U$, $V$) and singular values ($\Sigma$) for those layers.
For the subsequent $m$ iterations, we apply the $S^2D$ gradient using these \textit{stale} cached components. While this introduces a minor approximation (as the weight matrix $\mathbf{W}$ evolves during these steps), it amortizes the high cost of SVD over $m$ steps, making the algorithm highly efficient. 

\begin{table*}[ht]
\centering
\caption{\textbf{SigLIP2 quantization performance on ImageNet1k}. Comparing ImageNet1k performance on AdamW and AdamW+$S^2D$ (Ours) across various weight (W) / activation (A) precision settings and post-training quantization (PTQ) methods. The table demonstrates that $S^2D$ shows consistent improvements for W4A4, W5A5, W6A6, and W8A8 configurations, when using ERQ, PTQ4ViT, and RepQ-ViT to perform PTQ.}
\label{tab:transposed_ptq}
\resizebox{\linewidth}{!}{
\begin{tabular}{llccccccccccccccc} 
\toprule

\multirow{2}{*}{\textbf{Res.}} &
\multirow{2}{*}{\textbf{Method}} &
\multirow{2}{*}{\textbf{FP16}} &
\multicolumn{4}{c}{\textbf{RepQ-ViT}} &
\multicolumn{4}{c}{\textbf{PTQ4ViT}} &
\multicolumn{4}{c}{\textbf{ERQ}} \\
\cmidrule(lr){4-7}
\cmidrule(lr){8-11}
\cmidrule(lr){12-15}
 & & & \textbf{W4A4} & \textbf{W5A5} & \textbf{W6A6} & \textbf{W8A8} 
         & \textbf{W4A4} & \textbf{W5A5} & \textbf{W6A6} & \textbf{W8A8}
         & \textbf{W4A4} & \textbf{W5A5} & \textbf{W6A6} & \textbf{W8A8} \\
\midrule

% ---------------- Base-384 ----------------
\multirow{2}{*}{\textbf{384}}
 & \textbf{AdamW} 
 & 85.0 
 & 37.4 & 46.0 & 58.5 & 83.0
 & 3.4  & 44.4 & 65.3 & 83.4
 & 65.6 & 78.5 & 81.1 & 83.8 \\

 & \textbf{Ours}
 & 85.0
 & \textbf{41.4} & \textbf{78.0} & \textbf{80.0} & \textbf{83.5}
 & \textbf{3.9}  & \textbf{62.0} & \textbf{81.5} & \textbf{84.1}
 & \textbf{73.0} & \textbf{81.9} & \textbf{82.7} & \textbf{84.0} \\
\midrule

% ---------------- Base-512 ----------------
\multirow{2}{*}{\textbf{512}}
 & \textbf{AdamW}
 & 85.3
 & 16.3 & 43.3 & 53.9 & 83.3
 & \textbf{4.2} & 46.9 & 80.1 & 84.5
 & 58.6 & 77.6 & 80.1 & 78.6 \\

 & \textbf{Ours}
 & 85.4
 & \textbf{17.2} & \textbf{79.2} & \textbf{82.7} & \textbf{84.3}
 & 3.8           & \textbf{77.0} & \textbf{83.0} & \textbf{84.9}
 & \textbf{68.1} & \textbf{82.3} & \textbf{83.5} & \textbf{84.1} \\

\bottomrule
\end{tabular}
}
\end{table*}

\section{Experiments}

Our experimental evaluation focuses on validating the effectiveness of $S^2D$ in producing quantization-friendly models across diverse settings. The experiments presented in this paper are designed around three important questions: (1) Does $S^2D$ effectively reduce activation outliers that hinder quantization? (2) Does $S^2D$ improve quantization performance across both PTQ and QAT regimes? (3) Do these quantization improvements generalize to downstream vision tasks?

\textbf{Hyperparameters for $S^2D$.} We use the following hyperparameters across all experiments: $\tau = 0.95$, $K_{\max}=3$, $m=100$, $n=2$, and $\lambda = 5\times 10^{-4}$. Here, $\tau$ is the PCDR threshold indicating concentrated spectral mass, $K_{\max}$ is the maximum number of dominant singular values considered, $m$ controls the interval (in steps) between $S^2D$ updates, $n$ is the power used in $S^2D$, and $\lambda$ is the decay strength. Additional algorithmic details, sensitivity analyses, and full training settings are provided in the Supplementary Material.

\subsection{Outlier Severity Across Model Scale}
We intiated our analysis by establishing the empirical foundation: activation outliers intensify with pre-training scale, motivating the need for conditioning methods like $S^2D$. Building on the motivational analysis presented in Section \ref{sec:motivation}, we quantify outlier severity across CLIP, SigLIP, and SigLIP2 vision models using metrics including maximum activation magnitude and the PCDR$_k$.

Figures \ref{fig:clip-outliers} demonstrate a clear monotonic trend that outlier severity systematically increases from CLIP to SigLIP to SigLIP2, showing a correlation with training duration and compute. All three models use the exact same ViT-Base \cite{wu2020visual} architecture which establishes outliers as a consequence of prolonged pre-training rather than architecture-specific artifacts. This observation motivates our focus on SigLIP2 for the various experiments in the paper. As outliers are most prominent in heavily pre-trained models, the quantization challenge are greatest in this regime.

\subsection{Post-Training Quantization (PTQ)}
\label{sec-ptq}

To evaluate $S^2D$'s capability in producing quantization-friendly models, we conduct PTQ experiments on ImageNet-1k classification. We initialize from the pre-trained SigLIP2 backbone and fine-tune using AdamW optimizer as the baseline method and AdamW+$S^2D$. Using both the approaches, the pre-trained model is fine-tuned for 10 epochs with similar hyperparamters and augmentations as outlined in \cite{wortsman2022model} to produce full-precision checkpoints. These fine-tuned models serve as inputs for post-training quantization.
For PTQ, we use the current state-of-the-art PTQ method, ERQ \cite{zhong2025accurateposttrainingquantizationvision}; a SOTA vision transformer PTQ method, PTQ4ViT \cite{yuan2024ptq4vitposttrainingquantizationvision}; and a re-parameterization based quantization method RepQ-ViT \cite{li2023repqvitscalereparameterizationposttraining} and quantize the full-precision models to different bit sizes. ERQ is competitive with all exisiting PTQ methods across vision and language transformers and hence serves as a strong PTQ baseline. Evaluation results for this experimental setup are reported in Table \ref{tab:transposed_ptq}.

% Based on Table \ref{tab:transposed_ptq}, it is clear that $S^2D$–fine-tuned models substantially outperform standard AdamW across all PTQ methods and bit-settings. For SigLIP2-Base-384 under the \textsc{ERQ} W4A4 setting, $S^2D$ achieves 72.99\% compared to 65.58\% for AdamW, a significant 7.41\% percentage point improvement. A larger gain of 17.52\% and 16.18\% is observed with PTQ4ViT; W5A5 and W6A6 setting respectively. The trend persists under \textsc{RepQ-ViT}, where $S^2D$ increases W5A5 accuracy from 46.04\% to 78.07\% and W6A6 accuracy from 58.49\% to 79.98\%. This consistency across PTQ strategies, strongly suggests that the benefits of $S^2D$ stem from improved weight conditioning rather than method-specific interactions. Importantly, full-precision performance is essentially preserved, confirming that $S^2D$ conditions the weight geometry specifically for quantization robustness without sacrificing the model's inherent representational capacity. This indicates that $S^2D$'s spectral regularization targets only the pathological components introduced by prolonged AdamW pre-training—the inflated dominant singular values—while preserving the useful information encoded in the weight matrices. This allows $S^2D$ to function as a pure conditioning method that produces better-conditioned models ready for deployment across both full-precision and quantized settings.

Based on Table \ref{tab:transposed_ptq}, it is clear that $S^2D$–fine-tuned models substantially outperform standard AdamW across all PTQ methods and bit-settings. For SigLIP2-Base-384 under the \textsc{ERQ} W4A4 configuration, $S^2D$ achieves 72.99\% versus 65.58\% for AdamW, a sizable 7.41-point improvement. Even larger gains of 17.52 and 16.18 points are observed with \textsc{PTQ4ViT} under the W5A5 and W6A6 settings, respectively. The trend extends to \textsc{RepQ-ViT}, where $S^2D$ improves W5A5 accuracy from 46.04\% to 78.07\% and W6A6 from 58.49\% to 79.98\%. This consistent improvement across diverse PTQ strategies strongly suggests that the benefits of $S^2D$ arise from fundamentally better weight conditioning rather than method-specific interactions. Importantly, full-precision accuracy is essentially preserved, confirming that $S^2D$ reshapes the weight geometry specifically for quantization robustness without diminishing the model’s inherent representational capacity. Overall, these results indicate that $S^2D$’s spectral regularization selectively suppresses the pathological components introduced by prolonged AdamW pre-training while preserving the useful information encoded in the weight matrices. This allows $S^2D$ to function as a pure conditioning method that produces well-conditioned models ready for deployment across both full-precision and quantized settings.

% \begin{table*}[t!]
% \caption{
%  % Updated and more descriptive caption
% \textbf{Comparison of FFN Activation and Weight Statistics for Siglip2 and DinoV3.}
% We report the $PCDR_1$ and maximum absolute activation of the FFN layers, along with the maximum singular value of their corresponding weights, after fine-tuning with AdamW and AdamW+$S^2D$.}
% \centering
%  % Increased resizebox width to accommodate the new column
% \resizebox{0.8\linewidth}{!}{
% \begin{tabular}{llcccccc}
% \toprule
%  % Added "Model" column header
% & & \multicolumn{2}{c}{\textbf{PCDR_{1}}} & \multicolumn{2}{c}{\textbf{Max. Abs. Activation}} & \multicolumn{2}{c}{\textbf{Max Singular Value}} \\
% \cmidrule(lr){3-4} \cmidrule(lr){5-6} \cmidrule(lr){7-8}
% \textbf{Model} & \textbf{Layer} & \textbf{AdamW} & \textbf{Ours} & \textbf{AdamW} & \textbf{Ours} & \textbf{AdamW} & \textbf{Ours} \\
% \midrule
%  % Used \multirow for "Siglip2" to span 2 rows
% \multirow{2}{*}{Siglip2} & Layer 5 & 0.91 & 0.46 & 176.40 & 59.68 & 10.38 & 3.22 \\
% & Layer 9 & 0.77 & 0.09 & 1166.2 & 614.7 & 7.87 & 3.96 \\
% \midrule % Added a rule to separate the models
%  % Used \multirow for "DinoV3" to span 2 rows
% \multirow{2}{*}{DinoV3} & Layer 2 & 0.47 & 0.11 & 1048.41 & 440.38 & 37.29 & 17.94 \\
% & Layer 4 & 0.45 & 0.22 & 115.55 & 67.56 & 8.67 & 3.79 \\
% \bottomrule
% \end{tabular}
% }
% \label{tab:pcd_and_singular_values_comparison} % Label remains the same
% \end{table*}
\begin{table}[t!]
\hfill
\begin{minipage}{0.46\textwidth}
\caption{
\textbf{Improved Layer Conditioning.} PCDR\(_1\), maximum absolute activation, and maximum singular value for the FC1 layers of SigLIP2 after fine-tuning with AdamW and AdamW+$S^2D$ (Ours). 
The results show that $S^2D$ consistently reduces \textsc{PCDR\(_1\)} compared to AdamW, indicating better spectral concentration. 
We also observe notable decreases in absolute max activation (Max Abs.) and leading singular values.
}
\label{tab:pcd_after}
\centering
\begin{tabular}{lccc}
\toprule
\textbf{Metric} & \textbf{Optimizer} & \textbf{Layer 5} & \textbf{Layer 9} \\
\midrule
\multirow{2}{*}{PCDR\(_1\)} & AdamW & 0.91 & 0.77 \\
                            & Ours & 0.46 & 0.09 \\
\midrule
\multirow{2}{*}{Max Abs.}    & AdamW & 176.4 & 1166.2 \\
                             & Ours & 59.7 & 614.7 \\
\midrule
\multirow{2}{*}{$\sigma_{\max}$} & AdamW & 10.4 & 7.9 \\
                                 & Ours & 3.2 & 3.9 \\
\bottomrule
\end{tabular}
\end{minipage}
\end{table}

% \begin{figure}
% \centering
% \includegraphics[width=0.9\columnwidth]{assets/placeholder.png}
% \caption{\dpk{PLACEHOLDER IMAGE}Violin plots of activation distributions from the feed-forward network (FFN) layers of Siglip2 fine-tuned using AdamW and AdamW+$S^2D$.}
% \label{fig:ptq-layers}
% \end{figure}
To validate the link between spectral concentration and quantization performance, we analyze the per-layer distribution of PCDR$_1$ metrics in models fine-tuned with and without $S^2D$. Table \ref{tab:pcd_after} demonstrates the aggregate improvements in spectral concentration for models trained using AdamW and AdamW+$S^2D$, respectively. The AdamW baseline exhibits a wide range of activation magnitudes, whereas $S^2D$ significantly reduces large-magnitude activations without negatively affecting performance. This effect is crucial for quantization, as a single poorly conditioned layer can dominate the activation range and force suboptimal scaling decisions. By explicitly regularizing large dominant singular values, $S^2D$ reduces the maximum singular value in the affected layers, directly improving their conditioning. Additional analysis on DINO~\cite{simeoni2025dinov3} is provided in the supplementary material.

\subsection{Quantization-Aware Training}

% \begin{table}
% \centering
% \caption{}
% % Note: The table is wider, so you may need to adjust '0.8\columnwidth' 
% % or remove the \resizebox entirely.
% \resizebox{0.9\columnwidth}{!}{
% \begin{tabular}{lccccc}
% \toprule
% Method & W3A4 & W4A4 & W5A5 & W6A6 \\
% \midrule
% AdamW & XXX\% & XXX\% & XXX\% & XXX\%  \\
% Adam$S^2D$ & XXX\% & XXX\% & XXX\% & XXX\%  \\
% \bottomrule
% \end{tabular}}
% \label{tab:w3a3_w4a4_qat} % Updated label to be more descriptive
% \end{table}

We combine $S^2D$ with QAT in a challenging low-bit regime. We implement W3A4 and W4A4 quantization, a harder setup where the impact of activation outliers is particularly acute. In low-bit settings, even modest range imbalance can cause catastrophic quantization errors, making outlier mitigation critical for maintaining accuracy.
We compare two QAT training regimes: a standard AdamW-based QAT baseline, and the same setup enhanced with $S^2D$ regularization. Both start from the pre-trained SigLIP2-Base-384 checkpoint and are trained for 10 epochs on ImageNet with simulated quantization in the forward pass and Straight Through Estimators (STEs) in backward propagation, using identical learning schedules and hyperparameter. We use symmetric per-channel quantization for weights and asymmetric per-tensor quantization for activations. The goal is to evaluate whether $S^2D$'s conditioning persists and provides benefits in a learnable quantization regime.
$S^2D$ provides substantial benefits: an absolute improvement of 2.5\% and 3.9\% for W3A4 and W4A4 respectively. The results presented in Figure \ref{fig:w3a3_w4a4_qat} show that $S^2D$'s conditioning can be combined with custom quantization learning schemes and is not limited to full precision fine-tuning regime.

\begin{figure}[t]
\centering
\includegraphics[width=0.75\linewidth]{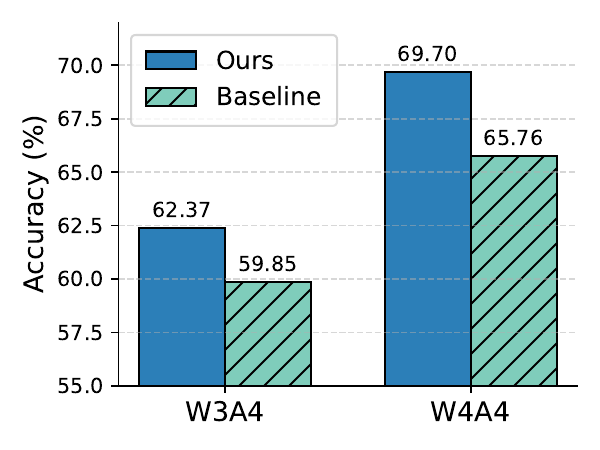}
\caption{\textbf{QAT Performance Gains.} Quantization-Aware Training (QAT) results for W3A4 and W4A4 on ImageNet1K using AdamW and 
AdamW+$S^2D$ (Ours). 
The bar plot shows that pairing QAT with $S^2D$ improves downstream accuracy over vanilla QAT..}
\label{fig:w3a3_w4a4_qat}
\end{figure}
\subsection{Downstream Task Adaptation}

\begin{table*}[t]
\centering
\caption{\textbf{Downstream Task Improvements.} Quantization results for object detection and instance segmentation on MS-COCO. 
The table reports ERQ (W4A4, W5A5, W6A6, W8A8) and PTQ4ViT (W6A6, W8A8) performance, showing that 
AdamW+$S^2D$ (Ours) delivers consistent performance gains across PTQ settings.}
\label{tab:downstream-tasks}
\resizebox{0.85\textwidth}{!}{
\begin{tabular}{l l c  cccc  cc}
\toprule
\multirow{2}{*}{\textbf{Task}} &
\multirow{2}{*}{\textbf{Method}} &
\multirow{2}{*}{\textbf{FP16}} &
\multicolumn{4}{c}{\textbf{ERQ}} &
\multicolumn{2}{c}{\textbf{PTQ4ViT}} \\
\cmidrule(lr){4-7}
\cmidrule(lr){8-9}
 & & &
\textbf{W4A4} & \textbf{W5A5} & \textbf{W6A6} & \textbf{W8A8} &
\textbf{W6A6} & \textbf{W8A8} \\
\midrule

\multirow{2}{*}{\textbf{Object Detection}} 
& AdamW & 50.1 & 1.8 & 10.8 & 45.8 & 48.2 & 2.2 & 42.1 \\
& Ours  & 50.3 & \textbf{16.6} & \textbf{40.7} & \textbf{47.8} & \textbf{49.7} & \textbf{24.7} & \textbf{48.2} \\
\midrule

\multirow{2}{*}{\textbf{Instance Segmentation}}
& AdamW & 43.6 & 0.50& 11.7 & 39.9 & 41.9 & 1.9 & 36.5 \\
& Ours  & 43.8 & \textbf{13.0} & \textbf{34.4} & \textbf{41.4} & \textbf{43.2} & \textbf{20.7} & \textbf{42.8} \\
\bottomrule
\end{tabular}
}
\end{table*}

% \begin{table*}[t]
% \centering
% \caption{Downstream Task Quantization Results: Object Detection and Instance Segmentation on MS-COCO dataset.}
% \label{tab:downstream-tasks}
% % Adjusted resizebox to \textwidth to accommodate new columns
% \resizebox{\textwidth}{!}{%
% % Added 3 'c's to the column specifier for the 3 new columns
% \begin{tabular}{lcccccccc}
% \toprule
% % Reordered columns for logical grouping by W/A and added new headers
% \textbf{Task} & \textbf{Method} & \textbf{FP16} & \textbf{ERQ-W4A4} & \textbf{ERQ-W5A5} & \textbf{PTQ4ViT-W6A6} & \textbf{ERQ-W6A6} & \textbf{PTQ4ViT-W8A8} & \textbf{ERQ-W8A8} \\
% \midrule
% % Added 'XXX' placeholders for new columns in their respective positions
% \multirow{2}{*}{\textbf{Object Detection}} & AdamW & 50.13  & 1.87 & 10.82 & 2.25 & 45.85 & 42.15 & 48.19 \\
% & Ours & 50.33 & \textbf{16.60} & \textbf{40.70} & \textbf{24.77} & \textbf{47.80} & \textbf{48.19} & \textbf{49.72} \\ 
% \midrule
% \multirow{2}{*}{\textbf{Instance Segmentation}} & AdamW & 43.64  & 0.45 & 11.69 & 1.90 & 39.94 & 36.52 & 41.91 \\
% & Ours & 43.84  & \textbf{13.00} & \textbf{34.38} & \textbf{20.67} & \textbf{41.45} & \textbf{42.77} & \textbf{43.22} \\ 
% \bottomrule
% \end{tabular}%
% }
% \end{table*}
\noindent\textbf{\textit{Object Detection and Instance Segmentation.} }To validate the performance of $S^2D$ on downstream vision tasks, we focus on object detection and instance segmentation, tasks that are fundamentally different from the pre-training objective and require significant model adaptation. We fine-tune the pre-trained SigLIP2-Base-384  backbone on MS-COCO \cite{lin2015microsoftcococommonobjects} dataset. For both tasks, we initialize the encoder from the pre-trained checkpoint and fine-tune using the AdamW baseline and AdamW combined with $S^2D$. We leverage Detectron2 \cite{wu2019detectron2} library and employ Generalized RCNN network with a FPN head. Fine-tuning is performed for 270K iterations until convergence and identical learning schedules and hyperparameters across both methods. After fine-tuning, the resulting models are quantized to W4A4 using ERQ and PTQ4ViT. Table~\ref{tab:downstream-tasks} presents the quantization performance on downstream tasks. Consistent with the ImageNet results, $S^2D$-fine-tuned models substantially outperform AdamW baselines across both tasks and quantization methods. On object detection, $S^2D$ achieves 29.88 points AP50 improvement with W5A5 ERQ quantization. Baseline PTQ4ViT approaches random performance at lower bits but $S^2D$ is able to retain relatively more performance across quantization bits. For semantic segmentation, the improvements are equally consistent demonstrating task-agnostic benefits. Importantly, full-precision accuracy is slightly better across both tasks, confirming that $S^2D$ functions purely as a better conditioning method without sacrificing model capacity.

\noindent\textbf{\textit{Vision-Language Models.} }
We evaluate $S^2D$ on LLaVA-1.5 \cite{liu2023improvedllava} training setting, combining a SigLIP2-Base-384 vision encoder with a Qwen2.5-0.5B language model \cite{yang2025qwen3}. This architecture presents a unique opportunity to study whether $S^2D$ can effectively condition heterogeneous model components with different pre-training dynamics and architectural properties.
We apply $S^2D$ regularization to both components of LLaVA 1.5 during fine-tuning. Following standard practice, we first pre-train the projector using Llava-Pretrain \cite{liu2023llava} dataset to align the vision encoder and language decoder. Post this, fine-tuning is performed on Llava-Instruct \cite{liu2023improvedllava} using identical hyperparameters across AdamW and Adam$S^2D$ approaches. Following fine-tuning, both full-precision and quantized versions are evaluated on standard VLM benchmarks including GQA \cite{hudson2019gqa}, TextVQA \cite{singh2019towards}, POPE \cite{li2023evaluating} and DocVQA \cite{mathew2021docvqa} covering visual question answering, fine-grained OCR and hallucination benchmarks.
Table~\ref{tab:vlm_results} demonstrates that $S^2D$ provides consistent improvements across diverse VLM evaluation benchmarks. It is interesting to note that the full-precision performance of $S^2D$ conditioned model is better than the baseline. Across the benchmarks $S^2D$ shows strong quantization performance except in the case of POPE where the differences are insignificant. The consistent improvements demonstrates that $S^2D$'s spectral conditioning benefits the entire VLM pipeline. The improvement in full-precision performance shows that $S^2D$ may serve as a better conditioning method for multi-modal models.

\begin{table}
\centering
\caption{\textbf{VLM Quantization.} Evaluation of LLaVA-1.5 (SigLIP2-Base-384 + Qwen2.5-0.5B) under AdamW and AdamW+$S^2D$ (Ours) fine-tuning. 
$S^2D$ provides gains in both full-precision and quantized settings across GQA, TextVQA, POPE, and DocVQA, 
demonstrating its effectiveness as a spectral conditioning method for multimodal VLM pipelines.}
\label{tab:vlm_results}
\resizebox{\columnwidth}{!}{%
\begin{tabular}{lccccc}
\toprule
\textbf{Benchmark} & \textbf{Method} & \textbf{FP16} & \textbf{W4A4} & \textbf{W5A5} & \textbf{W6A6} \\
\midrule
\multirow{2}{*}{\textbf{GQA}}     & AdamW & 47.6 & 35.3 & 38.5 & 43.9 \\
                                  & Ours  & 55.4 & 40.1 & 47.6 & 52.8 \\ 
\midrule
\multirow{2}{*}{\textbf{TextVQA}} & AdamW & 31.8 & 6.79 & 10.4 & 19.2\\
                                  & Ours  & 36.7 & 9.8 & 17.6 & 28.0\\
\midrule
\multirow{2}{*}{\textbf{POPE}} & AdamW & 69.8 & 66.6 & 67.4 & 69.8 \\
                               & Ours  & 70.3 & 66.6 & 66.6 & 69.4 \\
\midrule
\multirow{2}{*}{\textbf{DocVQA}} & AdamW & 14.3 & 5.7 & 6.0 & 8.8 \\
                                 & Ours  & 17.4 & 6.1 & 7.6 & 12.4 \\
\bottomrule
\end{tabular}%
}
\end{table}
\subsection{Latency analysis}
There is a natural increase in training time due to the SVD computations required by $S^2D$. In our setup, a full SVD pass over all layers takes approximately 18 seconds, while the corresponding gradient pass requires about 6 seconds on 8×NVIDIA A100 GPUs. Over 10 epochs of training (25K steps), we perform roughly 250 SVD updates. However, we can effectively hide this latency by parallelizing the SVD computation and triggering it 3 iterations before it is needed. As shown in Section~\ref{sec:implementation}, we can also approximate the SVD by computing it every 100 steps without any loss in performance, making this amortized strategy both feasible and efficient. As a result, the overall overhead introduced by $S^2D$ is negligible. 

\section{Conclusions and Future Work}

\noindent\textbf{\textit{Conclusions.} }This work addresses the emergence of activation outliers that arise from prolonged optimization with AdamW, establishing that these outliers are optimization artifacts rather than functionally meaningful features, and that their severity escalates with pre-training scale. To diagnose and quantify this phenomenon, we introduced PCDR$_k$, an effective spectral diagnostic, and proposed $S^2D$, a geometrically principled regularization method that selectively suppresses dominant singular components while preserving useful model capacity. Extensive experiments show that $S^2D$ yields substantial and consistent improvements across PTQ and QAT pipelines while maintaining full-precision accuracy across architectures and tasks. Together, our findings demonstrate that spectral conditioning is a powerful and general mechanism for mitigating activation outliers and enabling robust low-precision deployment.

% This work addresses activation outliers that emerge from prolonged optimization with adaptive methods like AdamW, establishing them as optimization artifacts that escalate with pre-training scale. To mitigate this, we introduces PCDR$_k$, a diagnostic tool, and Selective Spectral Decay ($S^2D$), a regularization method that targets pathological spectral components. Experiments show $S^2D$ substantially improves existing state of the art PTQ and QAT without sacrificing full-precision performance across a diverse range of vision tasks. 

\noindent\textbf{\textit{Future Work.} }There are several promising directions building on this work. First, exploring the interaction between $S^2D$ and alternative optimizers may reveal whether spectral conditioning remains necessary under fundamentally different optimization dynamics. Second, applying $S^2D$ during large-scale pre-training, rather than only during downstream fine-tuning, could suppress outlier formation at its source and potentially improve both stability and generalization. Finally, extending our analysis to multimodal architectures would help assess the universality of the spectral mechanisms uncovered here. 

{
    \small
    \bibliographystyle{ieeenat_fullname}
    \bibliography{main}
}

% WARNING: do not forget to delete the supplementary pages from your submission 
\clearpage
\setcounter{page}{1}
\appendix
\section*{Appendix}
%% ============================================================
\section{Theoretical Analysis: Spectral Bounds}
\label{app:spectral_analysis}

In this section, we provide the formal proof for Theorem \ref{thm:activation_bound} stated in the main text and expand upon the connection between spectral norms and the propagation of activation outliers in deep networks.

\subsection{Proof of Activation Magnitude Bound}

We first restate the bound regarding the relationship between the spectral norm of a weight matrix and the magnitude of the output activations.

\setcounter{theorem}{0} % Adjust this number to match the main paper reference if needed
\begin{theorem}[Restated]
\label{thm:activation_bound_app}
Let $\mathbf{y} = \mathbf{Wx}$ be the output of a linear layer for an input vector $\mathbf{x} \in \mathbb{R}^n$ and weight matrix $\mathbf{W} \in \mathbb{R}^{m \times n}$. The Euclidean norm of the output vector is bounded by the spectral norm of the weight matrix $\sigma_{\max}(\mathbf{W})$, such that:
\begin{equation}
    \|\mathbf{y}\|_2 \le \sigma_{\max}(\mathbf{W}) \cdot \|\mathbf{x}\|_2
\end{equation}
\end{theorem}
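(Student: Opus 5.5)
The plan is to argue directly from the singular value decomposition in \eqref{eq:svd}, rather than appeal to the definition of the induced operator norm. Writing that identity $\|\mathbf{W}\|_2=\sigma_{\max}(\mathbf{W})$ would essentially assume the result. Fix $\mathbf{x}\in\mathbb{R}^n$ and write $\mathbf{W}=\mathbf{U\Sigma V}^\intercal$ with $\mathbf{U},\mathbf{V}$ orthogonal. We will then compute $\|\mathbf{y}\|_2^2=\|\mathbf{U\Sigma V}^\intercal\mathbf{x}\|_2^2$ in the rotated coordinates.

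The first step uses orthogonal invariance of the Euclidean norm. Since $\mathbf{U}^\intercal\mathbf{U}=\mathbf{I}_m$, we have $\|\mathbf{U}\mathbf{z}\|_2^2=\mathbf{z}^\intercal\mathbf{U}^\intercal\mathbf{U}\mathbf{z}=\|\mathbf{z}\|_2^2$, so $\|\mathbf{y}\|_2=\|\mathbf{\Sigma}\mathbf{c}\|_2$, where $\mathbf{c}=\mathbf{V}^\intercal\mathbf{x}$. By the same identity for $\mathbf{V}$, $\|\mathbf{c}\|_2=\|\mathbf{x}\|_2$.

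The second step is the coordinate computation. Because $\mathbf{\Sigma}$ is rectangular diagonal, $\mathbf{\Sigma c}$ has entries $\sigma_r c_r$ for $r\le N$ and zeros elsewhere. Hence
\[
\|\mathbf{y}\|_2^2=\sum_{r=1}^{N}\sigma_r^2c_r^2\le\sigma_1^2\sum_{r=1}^{N}c_r^2\le\sigma_{\max}(\mathbf{W})^2\sum_{r=1}^{n}c_r^2=\sigma_{\max}(\mathbf{W})^2\|\mathbf{x}\|_2^2 .
\]
The first inequality uses the ordering $\sigma_1\ge\cdots\ge\sigma_N\ge0$. The second discards the nonnegative terms $c_r^2$ with $r>N$. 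Taking square roots, which is valid since both sides are nonnegative, gives the claimed bound.

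No step is a real obstacle. The only point needing care is the bookkeeping when $m\neq n$. The rectangular $\mathbf{\Sigma}$ annihilates the coordinates $c_{N+1},\dots,c_n$ (when $n>m$) and pads with zeros (when $m>n$), so the inequality $\sum_{r\le N}c_r^2\le\|\mathbf{c}\|_2^2$ must be stated rather than replaced by equality.

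Optionally, one can add a remark on tightness: taking $\mathbf{x}=\mathbf{v}_1$ gives $\mathbf{c}=\mathbf{e}_1$, so equality holds. This shows the bound is attained and connects the result to the role of the top singular component measured by PCDR$_1$.
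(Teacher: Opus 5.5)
Your proof is correct, and it takes a different route from the paper's. The paper invokes the characterization $\|\mathbf{W}\|_2=\sup_{\mathbf{x}\neq\mathbf{0}}\|\mathbf{Wx}\|_2/\|\mathbf{x}\|_2=\sigma_{\max}(\mathbf{W})$ as a known fact. The bound then follows from the definition of a supremum, multiplying through by $\|\mathbf{x}\|_2$, with $\mathbf{x}=\mathbf{0}$ treated as a separate trivial case. As you anticipated, the whole content of the theorem sits in that cited identity.

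You instead prove the needed half of the identity directly from the full SVD. Orthogonal invariance reduces $\|\mathbf{y}\|_2$ to $\|\mathbf{\Sigma c}\|_2$ with $\|\mathbf{c}\|_2=\|\mathbf{x}\|_2$, and the coordinatewise estimate $\sum_{r\le N}\sigma_r^2c_r^2\le\sigma_1^2\|\mathbf{c}\|_2^2$ finishes it. Your tightness remark, taking $\mathbf{x}=\mathbf{v}_1$, supplies the reverse inequality, so together you establish exactly the identity the paper takes for granted. That identity is usually proved by essentially your computation, or by the Rayleigh-quotient bound for $\mathbf{W}^\intercal\mathbf{W}$.

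The paper's version buys brevity by leaning on a standard linear-algebra fact. Yours is self-contained and needs no case split at $\mathbf{x}=\mathbf{0}$. It also makes the rectangular bookkeeping for $m\neq n$ explicit, and it shows the bound is attained along the top right singular vector. That last point ties the inequality directly to the dominant spectral component that PCDR$_1$ and $S^2D$ target.
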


\begin{proof}
Let $\|\cdot\|_2$ denote the Euclidean norm on vectors. The matrix norm induced by the vector Euclidean norm (the spectral norm) is defined as:
\begin{equation}
    \|\mathbf{W}\|_2 := \sup_{\mathbf{x} \neq \mathbf{0}} \frac{\|\mathbf{Wx}\|_2}{\|\mathbf{x}\|_2} = \sigma_{\max}(\mathbf{W})
\end{equation}
where $\sigma_{\max}(\mathbf{W})$ is the largest singular value of $\mathbf{W}$. By the definition of the supremum, for any specific $\mathbf{x} \in \mathbb{R}^n$, it must hold that:
\begin{equation}
    \frac{\|\mathbf{Wx}\|_2}{\|\mathbf{x}\|_2} \le \sigma_{\max}(\mathbf{W})
\end{equation}
Multiplying both sides by $\|\mathbf{x}\|_2$ (assuming $\mathbf{x} \neq \mathbf{0}$; the trivial case holds for $\mathbf{x}=\mathbf{0}$) yields:
\begin{equation}
    \|\mathbf{y}\|_2 = \|\mathbf{Wx}\|_2 \le \sigma_{\max}(\mathbf{W})\|\mathbf{x}\|_2
\end{equation}
\end{proof}

This result establishes that a large spectral norm is a necessary condition for a linear layer to amplify a reasonably scaled input into a large-magnitude output outlier.

%% ============================================================
\section{Algorithm}
\label{app:algorithm}

Algorithm~\ref{alg:s2d_simplified_clean} presents the complete training procedure for Selective Spectral Decay (S$^2$D). The algorithm operates by periodically computing singular value decompositions and selectively penalizing dominant spectral components responsible for activation outliers.

\textbf{Key steps:}
\begin{enumerate}[nosep]
    \item \textbf{Periodic SVD updates (Lines 6--17):} Every $k$ training steps, the algorithm computes the SVD of each layer's weight matrix $\mathbf{W}^{(l)} = \mathbf{U}\mathbf{\Sigma}\mathbf{V}^{\top}$, identifying the spectral structure of the weights.

    \item \textbf{Outlier detection (Line 9):} Using the Principal Component Dominance Ratio (PCDR), the algorithm identifies the minimum rank $k$ where $\text{PCDR} \geq \tau$. This determines how many dominant singular values are responsible for creating outliers.

    \item \textbf{Penalty matrix construction (Lines 11--12):} For layers with identified outliers, a penalty matrix $\mathbf{G}_{\text{reg}}$ is constructed by raising the top-$k$ singular values to power $n$ and reconstructing the partial matrix. This targets only the problematic spectral components.

    \item \textbf{Gradient update (Lines 18--21):} During each training step, the standard task gradient is augmented with the cached penalty $\lambda\mathbf{G}_{\text{reg}}$, applying selective regularization pressure to the weight components aligned with the largest singular values while leaving other components largely unaffected.
\end{enumerate}

\begin{algorithm*}[t]
\caption{Selective Spectral Decay (S$^2$D)}
\label{alg:s2d_simplified_clean}
\begin{algorithmic}[1]
    \State \textbf{Input:} Weights $\mathbf{W}$
    \State \textbf{Hyperparams:} Power $n$, Reg.\ strength $\lambda$, Learning rate $\eta$, Update frequency $k$, PCDR threshold $\tau$

    \State Initialize step counter $t \leftarrow 0$
    \State Initialize penalty matrices $\mathbf{G}_{\text{reg}}^{(l)} \leftarrow \mathbf{0}$ for all layers $l$

    \While{\textbf{training}}
        \If{$t \bmod k = 0$} \Comment{Periodic spectral update}
            \For{layer $l = 1$ \textbf{to} $L$}
                \State $\mathbf{U}, \mathbf{\Sigma}, \mathbf{V} \leftarrow \text{SVD}(\mathbf{W}^{(l)})$
                \State $\hat{k} \leftarrow \min \{ k' : \text{PCDR}(\mathbf{\Sigma}, k') \ge \tau \}$ \Comment{Identify outlier rank cutoff}
                \If{$\hat{k}$ is defined}
                    \State $\mathbf{\Sigma}_n \leftarrow \text{diag}(\sigma_1^n, \dots, \sigma_{\hat{k}}^n)$
                    \State $\mathbf{G}_{\text{reg}}^{(l)} \leftarrow \mathbf{U}_{:, 1:\hat{k}} \, \mathbf{\Sigma}_n \, (\mathbf{V}_{:, 1:\hat{k}})^\top$ \Comment{Cache penalty matrix}
                \Else
                    \State $\mathbf{G}_{\text{reg}}^{(l)} \leftarrow \mathbf{0}$ \Comment{No significant outlier rank found}
                \EndIf
            \EndFor
        \EndIf

        \State $\nabla_{\mathbf{W}} \mathcal{L}_{\text{task}} \leftarrow \text{Backward}(\mathcal{L}_{\text{task}}(\text{batch}))$ \Comment{Standard task loss gradient}

        \For{layer $l = 1$ \textbf{to} $L$} \Comment{Apply regularized update}
            \State $\mathbf{W}^{(l)} \leftarrow \mathbf{W}^{(l)} - \eta \left( \nabla_{\mathbf{W}^{(l)}} \mathcal{L}_{\text{task}} + \lambda \, \mathbf{G}_{\text{reg}}^{(l)} \right)$
        \EndFor
        \State $t \leftarrow t + 1$
    \EndWhile
\end{algorithmic}
\end{algorithm*}

%% ============================================================
\section{Analysis of Outlier Origins}
\label{app:outlier_origins}

In this section, we expand upon the connection between adaptive optimizers and the formation of activation outliers in transformer models. While the dominant singular \textit{directions} ($U, V$) encode semantically meaningful representations, their \textit{extreme magnitudes} ($\Sigma$) are predominantly optimization artifacts rather than functionally necessary features.

\paragraph{Evidence from Prior Work.}
Several independent lines of evidence support this characterization.
\cite{caples2024adam} show that Adam-trained models exhibit rapid growth in excess kurtosis ($>100$), indicating the emergence of significant outlier channels, whereas SGD-trained models maintain substantially lower kurtosis throughout training.
\cite{elhage2023privileged} demonstrate that Adam's component-wise normalization privileges the training basis; when this basis is rotated to decorrelate the model, outliers disappear without performance loss, confirming they are not functionally necessary.
Furthermore, \cite{he2024understanding} link outlier features to large diagonal adaptive learning rates in Adam, showing that reducing adaptivity minimizes outlier formation.

\paragraph{Implications for S$^2$D.}
These findings establish that outlier magnitudes are preventable artifacts of AdamW's basis preference and anisotropic update dynamics. S$^2$D acts as a targeted counter-force to this spectral amplification. The fact that S$^2$D maintains or improves full-precision accuracy (e.g., +1.2\% on LLaVA, Table 5 in the main text) confirms that suppressing these extreme magnitudes is benign to the model's semantic capacity.

%% ============================================================
\section{Comparison with Alternative Regularization Approaches}
\label{app:regularization_comparison}

\paragraph{S$^2$D vs.\ Rotation Methods (SpinQuant, QuIP).}
Rotation-based methods mitigate outliers by \textit{redistributing} activation magnitudes through a learned or analytically computed basis transformation $W' = RW$. In contrast, S$^2$D \textit{suppresses} the spectral cause directly by penalizing the dominant singular values in $\Sigma$. These two strategies are thus orthogonal and potentially complementary. Additionally, S$^2$D avoids the online inference overhead of rotation methods, producing standard weights compatible with vanilla deployment kernels.

\paragraph{S$^2$D vs.\ Standard Spectral Regularization.}
Standard spectral regularization applies uniform pressure to every singular component across the network. As shown in Table~\ref{tab:pcdr_ablation_supp}, applying spectral regularization without the PCDR diagnostic collapses W4A4 accuracy to 40.1\% on ImageNet (SigLIP2-Base-384). PCDR acts as a surgical guide, targeting only the spectral components identified to cause pathological activation concentration. Without this selectivity, regularization indiscriminately suppresses both harmful and beneficial spectral components, degrading model capacity.

\begin{table}[t]
    \centering
    \caption{\textbf{Impact of PCDR-guided layer selection.} Comparison of S$^2$D with and without PCDR targeting on ImageNet classification using ERQ quantization (SigLIP2-Base-384). Without PCDR, uniform spectral regularization severely degrades low-bit performance.}
    \label{tab:pcdr_ablation_supp}
    \begin{tabular}{lcc}
    \toprule
     & No PCDR & S$^2$D \\
    \midrule
    W4A4 & 40.1 & \textbf{73.0} \\
    W5A5 & 77.1 & \textbf{81.9} \\
    W6A6 & 81.2 & \textbf{82.7} \\
    \bottomrule
    \end{tabular}
\end{table}

%% ============================================================
\section{Self-Supervised Backbone}
\label{app:self_supervised}

We extend our experiments to DINOv3~\cite{simeoni2025dinov3}, a recently trained self-supervised vision backbone. We attempted to quantize DINOv3 using the official ERQ codebase~\cite{zhong2025accurateposttrainingquantizationvision}; however, all ERQ configurations yielded near-random accuracy regardless of bit-width, suggesting an incompatibility with the self-supervised feature distribution. We therefore report results exclusively with PTQ4ViT~\cite{yuan2024ptq4vitposttrainingquantizationvision} in Table~\ref{tab:dino_ptq}. We also present spectral statistics for DINOv3 in Table~\ref{tab:pcd_and_singular_values_comparison}. Both the PTQ results and spectral patterns are consistent with the SigLIP2 findings, confirming that S$^2$D improves quantization of modern self-supervised models.

\begin{table*}[t]
\centering
\caption{\textbf{Comparison of FFN activation and weight statistics for SigLIP2 and DINOv3.}
We report the PCDR$_1$ and maximum absolute activation of the FFN layers, along with the maximum singular value of their corresponding weights, after fine-tuning with AdamW and AdamW+S$^2$D.}
\label{tab:pcd_and_singular_values_comparison}
\resizebox{0.8\linewidth}{!}{
\begin{tabular}{llcccccc}
\toprule
& & \multicolumn{2}{c}{\textbf{PCDR$_{1}$}} & \multicolumn{2}{c}{\textbf{Max Abs.\ Activation}} & \multicolumn{2}{c}{\textbf{Max Singular Value}} \\
\cmidrule(lr){3-4} \cmidrule(lr){5-6} \cmidrule(lr){7-8}
\textbf{Model} & \textbf{Layer} & \textbf{AdamW} & \textbf{S$^2$D} & \textbf{AdamW} & \textbf{S$^2$D} & \textbf{AdamW} & \textbf{S$^2$D} \\
\midrule
\multirow{2}{*}{SigLIP2} & Layer 5 & 0.91 & 0.46 & 176.40 & 59.68 & 10.38 & 3.22 \\
& Layer 9 & 0.77 & 0.09 & 1166.2 & 614.7 & 7.87 & 3.96 \\
\midrule
\multirow{2}{*}{DINOv3} & Layer 2 & 0.47 & 0.11 & 1048.41 & 440.38 & 37.29 & 17.94 \\
& Layer 4 & 0.45 & 0.22 & 115.55 & 67.56 & 8.67 & 3.79 \\
\bottomrule
\end{tabular}
}
\end{table*}

\begin{table}[t]
    \centering
    \caption{\textbf{PTQ4ViT quantization results on DINOv3-Base} with and without S$^2$D regularization. ImageNet top-1 accuracy (\%) is reported.}
    \label{tab:dino_ptq}
    \begin{tabular}{lccc}
        \toprule
        \textbf{Method} & \textbf{W8A8} & \textbf{W6A6} & \textbf{W5A5} \\
        \midrule
        AdamW & 58.03 & \textbf{57.04} & 28.52 \\
        AdamW + S$^2$D  & \textbf{76.53} & 56.10 & \textbf{30.69} \\
        \bottomrule
    \end{tabular}
\end{table}

%% ============================================================
\section{Extension to Language Models}
\label{app:language_models}

To evaluate the generality of S$^2$D beyond vision and vision-language tasks, we conduct a preliminary experiment on a pure language model. We fine-tune Qwen2.5-0.5B using supervised fine-tuning (SFT) on the Dolci dataset, applying S$^2$D with the same hyperparameters used in the vision experiments (no task-specific tuning). We then evaluate on GSM8K (0-shot) under round-to-nearest (RTN) quantization at various bit-widths. Results are presented in Table~\ref{tab:gsm8k_zeroshot_supp}.

Despite using fewer than 1B training tokens and no hyperparameter adaptation for the language domain, S$^2$D consistently improves quantized performance across W8A8, W7A7, and W6A6 settings, with gains of +2.2, +2.6, and +2.0 percentage points respectively. The slight reduction in full-precision performance ($-1.0$) reflects the regularization trade-off, which is more than compensated by the quantization gains. We expect that a learning rate sweep and longer training schedule would further improve both full-precision and quantized results.

\begin{table}[t]
\centering
\caption{\textbf{GSM8K 0-shot results with RTN quantization for Qwen2.5-0.5B.} S$^2$D improves quantized accuracy across all tested bit-widths using the same hyperparameters from the vision experiments.}
\label{tab:gsm8k_zeroshot_supp}
\begin{tabular}{lcc}
\toprule
 & \textbf{S$^2$D} & \textbf{Baseline} \\
\midrule
FP & 28.2 & 29.2 \\
W8A8 & \textbf{26.2} & 24.0 \\
W7A7 & \textbf{24.7} & 22.1 \\
W6A6 & \textbf{19.8} & 17.8 \\
\bottomrule
\end{tabular}
\end{table}

%% ============================================================
\section{Hyperparameter Sensitivity}
\label{app:hyperparameters}

We analyze the robustness of S$^2$D by varying its key hyperparameters. As shown in Table~\ref{tab:hparam_sensitivity_cols}, the default configuration ($k{=}100$, $\text{topk}{=}3$, $\text{threshold}{=}0.95$) yields the highest W4A4 performance at 73.0\%.

We observe that a larger update interval ($k{=}100$) outperforms frequent updates ($k{=}10$), suggesting that accumulating statistics over a longer horizon improves stability. Interestingly, increasing $\text{topk}$ from 3 to 10 results in a marginal performance drop, indicating that outlier mitigation is most effective when targeting only the few most dominant singular directions. Finally, a stricter PCDR threshold of 0.95 proves optimal compared to lower values.

\begin{table}[t]
\centering
\caption{\textbf{Hyperparameter sensitivity analysis for S$^2$D on ImageNet.}
We vary the SVD computation frequency ($k$), number of targeted singular values ($\text{topk}$), and PCDR threshold. The \textbf{default} configuration is shown in bold.}
\label{tab:hparam_sensitivity_cols}
\begin{tabular}{ccccc}
\toprule
$\boldsymbol{k}$ & \textbf{topk} & $\boldsymbol{\tau}$ & \textbf{FP16 (\%)} & \textbf{W4A4 (\%)} \\
\midrule
\multicolumn{5}{c}{\textit{Vary SVD update frequency $k$}} \\
\midrule
10 & 3 & 0.95 & 85.0 & 72.2 \\
\textbf{100} & \textbf{3} & \textbf{0.95} & \textbf{85.0} & \textbf{73.0} \\
\midrule
\multicolumn{5}{c}{\textit{Vary number of targeted singular values}} \\
\midrule
100 & 5 & 0.95 & 85.0 & 72.5 \\
100 & 10 & 0.95 & 84.9 & 72.6 \\
\midrule
\multicolumn{5}{c}{\textit{Vary PCDR threshold $\tau$}} \\
\midrule
100 & 3 & 0.90 & 84.9 & 72.4 \\
100 & 3 & 0.80 & 84.8 & 72.8 \\
\bottomrule
\end{tabular}
\end{table}

\subsection{Sensitivity to Power Exponent $n$}

The power exponent $n$ in S$^2$D controls the degree of non-uniformity in the penalty applied to singular values: larger $n$ concentrates regularization pressure more aggressively on the dominant singular values. We chose $n{=}2$ (yielding a $\sigma^3$ penalty) to exert stronger regularization on the singular values contributing to outliers while preserving smaller components. Table~\ref{tab:n_sensitivity_supp} confirms that while higher orders ($n{=}3, 4$) still outperform the baseline, $n{=}2$ offers the optimal trade-off between outlier suppression and capacity preservation.

\begin{table}[t]
\centering
\caption{\textbf{Sensitivity to power exponent $n$.} ImageNet accuracy (\%) using ERQ quantization on SigLIP2-Base-384. $n{=}2$ provides the best balance across bit-widths.}
\label{tab:n_sensitivity_supp}
\begin{tabular}{lcccc}
\toprule
 & Baseline & $\boldsymbol{n{=}2}$ & $n{=}3$ & $n{=}4$ \\
\midrule
W4A4 & 65.6 & \textbf{73.0} & 68.3 & 69.6 \\
W5A5 & 78.5 & \textbf{81.9} & 79.0 & 78.7 \\
W6A6 & 81.1 & \textbf{82.7} & 81.7 & 82.4 \\
\bottomrule
\end{tabular}
\end{table}

\subsection{Amortized SVD Stability}

A potential concern with the amortized SVD computation (every $m{=}100$ steps) is whether the cached singular vectors ($U, V$) become stale and lead to inaccurate gradient updates. To validate this design choice, we analyzed the stability of the S$^2$D gradient penalty computed using cached versus freshly computed SVD factors. The cosine similarity between the two gradient signals remains above $0.99$ over the $m{=}100$ step caching interval, confirming that the singular vector subspaces evolve slowly relative to the caching frequency. This justifies the computational amortization and explains why $k{=}100$ outperforms more frequent updates ($k{=}10$) in Table~\ref{tab:hparam_sensitivity_cols}: the additional noise from frequent recomputation slightly destabilizes training without meaningful accuracy benefit.

\end{document}